\documentclass{article}

\PassOptionsToPackage{numbers,compress}{natbib}
\usepackage[final,main]{neurips_2026}

\usepackage[utf8]{inputenc}
\usepackage[T1]{fontenc}
\usepackage{url}
\usepackage{booktabs}
\usepackage{amsfonts}
\usepackage{amsmath}
\usepackage{amssymb}
\usepackage{nicefrac}
\usepackage{microtype}
\usepackage{xcolor}
\usepackage{graphicx}
\usepackage{algorithm}
\usepackage{algorithmic}
\usepackage{multirow}
\usepackage{subcaption}
\usepackage{tablefootnote}
\usepackage{amsthm}
\usepackage{cleveref}
\usepackage{pgfplots}
\usepackage{tikz}
\pgfplotsset{compat=1.17}
\usetikzlibrary{arrows.meta}

\newtheorem{theorem}{Theorem}[section]
\newtheorem{proposition}[theorem]{Proposition}

\theoremstyle{definition}
\newtheorem{definition}[theorem]{Definition}
\theoremstyle{remark}

\newcommand{\ours}{\textsc{CoverCal}}
\newcommand{\R}{\mathbb{R}}

\newcommand{\cS}{\mathcal{S}}
\newcommand{\cC}{\mathcal{C}}
\newcommand{\cP}{\mathcal{P}}

\newcommand{\cA}{\mathcal{A}}
\newcommand{\cB}{\mathcal{B}}
\newcommand{\Cov}{\mathrm{Cov}}

\title{Coverage-Based Calibration for Post-Training Quantization via Weighted Set Cover over Outlier Channels}

\author{%
  \textbf{Ibne Farabi Shihab}\thanks{Equal contribution.}\thanks{Corresponding author: \texttt{ishihab@iastate.edu}.}\textsuperscript{1}
  \and \textbf{Sanjeda Akter}\footnotemark[1]\textsuperscript{1}
  \and \textbf{Anuj Sharma}\textsuperscript{2}\\[2pt]
  \textsuperscript{1}Department of Computer Science, Iowa State University\\
  \textsuperscript{2}Department of Civil, Construction \& Environmental Engineering, Iowa State University\\
  \texttt{\{ishihab,sakter,anujs\}@iastate.edu}
}

\begin{document}

\maketitle

\begin{abstract}
Post-Training Quantization (PTQ) compresses large language models to low bit-widths using a small calibration set, and its quality depends strongly on which samples are chosen. We identify a failure mode in which calibration samples fail to activate outlier channels, hidden dimensions with unusually large activations, causing the quantizer to underestimate their dynamic range and producing per-channel reconstruction errors that dominate layer-wise loss. Motivated by this observation, we argue that PTQ calibration quality is governed more by weighted outlier-channel coverage than by generic sample representativeness, and formulate calibration selection as a weighted set cover problem over outlier channels. The objective is monotone submodular, and the greedy algorithm, COVERCAL, operates on pre-computed activation statistics and requires no GPU time at selection. We further show that the weight choice is internally consistent: under a stylized clipping model, missed weighted coverage upper-bounds surrogate loss, justifying the weighted coverage objective as principled rather than purely empirical. Across LLaMA-2, LLaMA-3, and Mistral, under AWQ and GPTQ backends and five downstream evaluations, COVERCAL improves over random, max-perplexity, max-activation-variance, and stratified baselines, with the largest gains at small calibration budgets. At INT4 with 128 samples, COVERCAL improves MMLU by 1.2 to 1.5 points over random calibration and reduces perplexity degradation by 15 to 30\%; with 64 samples, it matches or exceeds random calibration at 256. The contribution is not a new PTQ backend but a formulation of calibration selection as weighted outlier coverage, with a simple, efficient algorithm and a surrogate-based justification.
\end{abstract}
\section{Introduction}
\label{sec:intro}

Post-Training Quantization (PTQ) is a leading strategy for deploying large language models (LLMs) under memory and latency constraints. Modern PTQ methods such as GPTQ~\citep{frantar2022gptq} and AWQ~\citep{lin2023awq} quantize from FP16 to INT4 using only a small calibration set of roughly $128$ sequences and no retraining. The calibration set determines the activation statistics from which per-layer quantization scales are estimated, and therefore directly affects the quality of the resulting model. In standard practice, however, calibration samples are chosen almost entirely by count, usually through uniform random sampling from corpora such as C4 or WikiText. The question of \emph{which} samples to include has received relatively little principled attention.

We argue that calibration selection is better treated as a coverage problem than as a sampling problem, and that the relevant object to cover is not the input distribution but the set of outlier channels. A small subset of hidden dimensions in transformer activations exhibits magnitudes orders of magnitude larger than the median, and these channels are disproportionately important for quantization quality~\citep{dettmers2022gpt3, xiao2023smoothquant, lin2023awq}. When calibration fails to activate such a channel, the quantizer estimates its scale from the non-outlier range alone, and any input that subsequently activates the channel at inference is clipped by the undersized scale. The resulting reconstruction error is not spread uniformly across the layer: it concentrates on the missed channels and can dominate the aggregate layer-wise loss, as we demonstrate empirically in \Cref{app:perlayer}. In this regime, a calibration set can be distributionally representative and still be quantization-poor, because it leaves the wrong channels dormant.

The implication is that a good calibration set is one whose selected samples collectively activate as much of the important outlier structure as possible, with channels weighted by their contribution to quantization error. Once the problem is stated in this form, it becomes a weighted set cover objective. Each candidate sample covers the outlier channels it activates, each channel carries a weight reflecting its importance, and the task is to select a budgeted subset that maximizes total weighted coverage. This objective is monotone submodular, so greedy selection is the natural algorithmic choice and inherits a $(1-1/e)$ approximation guarantee from the classical analysis of \citet{nemhauser1978analysis}.

The paper develops this perspective as follows. \Cref{sec:formulation} makes the coverage formulation precise: it defines outlier channels, the associated weighted coverage objective, and a stylized per-channel clipping surrogate under which missed weighted coverage upper-bounds the surrogate loss. The latter result, which we call a surrogate consistency bound, is not a guarantee on downstream accuracy; it serves instead to fix the specific weight choice as the quantity that bounds the per-channel contribution to the surrogate rather than as an ad-hoc heuristic. \Cref{sec:method} introduces \ours{}, a simple greedy algorithm that optimizes the weighted coverage objective using pre-computed activation masks and requires no GPU time at selection. \Cref{sec:experiments,sec:analysis} evaluate the method across three model families, two PTQ backends, four calibration budgets, and five downstream benchmarks, and trace the resulting improvements back to measured differences in outlier coverage.

The scope of the paper's claims is deliberately narrow. The theoretical content applies to the coverage objective and the clipping surrogate, not to the full Frobenius reconstruction loss or to downstream accuracy; the empirical link between weighted coverage and downstream metrics is established experimentally rather than derived. We do not claim that weighted set cover is the unique appropriate formulation of calibration selection, nor that outlier coverage fully explains PTQ behavior in every regime. Within this scope, the contribution is practically useful: \ours{} is backend-agnostic, has negligible selection-time overhead, and on LLaMA-3-8B with AWQ at INT4 achieves with $64$ calibration samples what random calibration requires $256$ to match.

\section{Related Work}
\label{sec:related}

Weight quantization of large language models has developed along several lines that emphasize different aspects of the compression problem. GPTQ~\citep{frantar2022gptq} quantizes weights layer by layer using approximate second-order information, while AWQ~\citep{lin2023awq} identifies salient weight channels from activation magnitudes and protects them during rounding. SqueezeLLM~\citep{kim2023squeezellm} uses sensitivity-based non-uniform quantization, QuIP~\citep{chee2023quip} applies random orthogonal transformations to suppress outlier structure, and SpQR~\citep{dettmers2023spqr} separates outlier weights into a higher-precision sparse component. A parallel line of work eliminates outliers at the representation level through structured rotations: QuaRot~\citep{ashkboos2024quarot} and SpinQuant~\citep{liu2024spinquant} apply Hadamard or learned orthogonal transformations that fold outlier structure into rotationally equivalent representations which then quantize more uniformly, and TurboQuant~\citep{zandieh2025turboquant} exploits rotational invariance of attention inner products to eliminate calibration entirely for the KV cache via data-oblivious Lloyd-Max scalar quantization on rotated coordinates. These rotation-based approaches are complementary to the present work. For weight quantization via GPTQ or AWQ, the reconstruction objective is not invariant to orthogonal rotation of the weight matrix, so calibration data remains essential for setting per-channel scales even when the underlying representation is rotated. Our contribution addresses the calibration selection that remains in this regime and is agnostic to whether the representation is rotated.

Outlier structure in transformer activations has itself been the subject of direct study. \citet{dettmers2022gpt3} first documented the existence of extreme activation outliers in large transformer models, SmoothQuant~\citep{xiao2023smoothquant} migrates activation difficulty into weights through per-channel rescaling, and \citet{bondarenko2024quantizable} analyze the conditions under which outliers emerge during pre-training. Related work on Fisher-aligned subspace diagnostics~\citep{shihab2026fisher} identifies important weight subspaces via activation-gradient coupling, providing a complementary perspective on which dimensions are most sensitive to compression. The present work neither removes nor smooths outliers; it addresses the complementary question of how calibration data should be chosen given that outliers exist and strongly affect PTQ. A recent argument in this area deserves specific engagement: \citet{wiliamson2024diminishing} observe that newer models such as Mistral-7B and LLaMA-3-8B exhibit lower aggregate outlier magnitudes and greater robustness to calibration-set variation than older OPT-class models, and interpret this as evidence that outliers matter less in modern LLMs. We view that observation as complementary rather than contradictory. Their analysis is aggregate, summarizing average activation behavior across the calibration set, whereas the failure mode we target is per-channel; even when aggregate outlier magnitudes decline, specific uncovered channels can still produce disproportionate reconstruction error on newer models, as our per-layer breakdown on LLaMA-3-8B at INT4 makes concrete (\Cref{app:perlayer}).

The question of how calibration data itself affects PTQ has only recently begun to receive systematic treatment. \citet{impactcalib2023} provide the first empirical study of calibration source and sample count. Subsequent work has proposed self-calibration from model-generated data~\citep{selfcalib2024}, model-agnostic calibration curation~\citep{calicuration2025}, and capability-preserving calibration for quantization~\citep{preservecalib2025}. Each of these frames the problem primarily in terms of data representativeness or quality. Our formulation differs structurally: we cast calibration as coverage of outlier channels rather than representation of the input distribution, which exposes the weighted set cover structure and enables greedy submodular maximization with classical guarantees. We do not include these recent methods as experimental baselines because they target distinct problem settings. The self-calibration approach of \citet{selfcalib2024} generates rather than selects calibration data, \citet{calicuration2025} operates on a different model family with no public implementation available, and \citet{preservecalib2025} optimizes a capability-preservation objective that is not directly aligned with the downstream accuracy metrics we use. The baselines we evaluate against (\Cref{sec:experiments}) cover the standard fixed-pool selection heuristics most directly comparable to \ours{}.

Finally, the formalism we use is classical. Weighted set cover and its greedy approximation go back to \citet{chvatal1979greedy}, monotone submodular maximization under cardinality constraints is analyzed by \citet{nemhauser1978analysis}, and submodular selection has been applied to active learning~\citep{settles2009active}, coreset construction~\citep{sener2018active, mirzasoleiman2020coresets}, and data selection for neural-network training~\citep{xie2023data, killamsetty2021glister}. What distinguishes the present setting is the structure being covered: the relevant object is not generic data diversity but weighted outlier-channel coverage, and the weight choice is pinned by the surrogate consistency result of \Cref{sec:formulation}.

\section{Problem Formulation}
\label{sec:formulation}

This section makes the coverage formulation precise. We begin with the PTQ reconstruction objective itself, use it to define outlier channels and the notion of coverage, and then state the weighted coverage objective that \ours{} optimizes. We close the section with a surrogate consistency result that ties the specific weight choice to a stylized model of per-channel clipping.

Let $f_\theta$ be a pre-trained LLM with $L$ transformer layers, and let $\mathbf{W}^{(l)} \in \R^{d_{\text{out}} \times d_{\text{in}}}$ denote the weight matrix of layer $l$. PTQ compresses each $\mathbf{W}^{(l)}$ to a low-bit representation $\hat{\mathbf{W}}^{(l)}$ by minimizing a layer-wise reconstruction objective of the form
\begin{equation}
    \hat{\mathbf{W}}^{(l)} = \arg\min_{\hat{\mathbf{W}}}
    \left\|
    \mathbf{W}^{(l)}\mathbf{X}^{(l)} - \hat{\mathbf{W}}\mathbf{X}^{(l)}
    \right\|_F^2,
    \label{eq:ptq_objective}
\end{equation}
where $\mathbf{X}^{(l)} \in \R^{d_{\text{in}} \times N}$ is the activation matrix obtained by running the calibration set through the preceding layers. The quantization scales that minimize \Cref{eq:ptq_objective} depend on the statistics of $\mathbf{X}^{(l)}$, and those statistics in turn depend on the calibration set $\cS$. The calibration set is therefore not merely a source of reference signal but a direct determinant of the scales that the quantized model uses at inference.

Within this setup, a natural notion of difficulty emerges at the channel level. For a calibration set $\cS$, the outlier score of channel $c$ in layer $l$ is defined as
\begin{equation}
    o_c^{(l)}(\cS) = \max_{s \in \cS} \bigl|X^{(l)}_{c,s}\bigr|,
    \label{eq:outlier_score}
\end{equation}
where $X^{(l)}_{c,s}$ denotes the activation of channel $c$ when processing sample $s$. Let $\tau^{(l)}$ be an outlier threshold for layer $l$; in our implementation $\tau^{(l)} = \mu^{(l)} + 6\sigma^{(l)}$, with $\mu^{(l)}$ and $\sigma^{(l)}$ computed from activations over a reference pool. A channel $(l,c)$ is called an outlier channel if its activation exceeds $\tau^{(l)}$ for at least one input in the reference pool. In our experiments the reference pool and candidate pool $\cP$ coincide, so every identified outlier channel is covered by at least one sample in $\cP$ by construction; the gap between the coverage achieved by any particular $\cS$ and full coverage therefore reflects the cardinality budget on $\cS$ rather than a limitation of the pool.

\begin{definition}[Outlier coverage]
Let $\cC = \{(l,c) : \text{channel } c \text{ at layer } l \text{ is an outlier channel}\}$. The calibration set $\cS$ \emph{covers} outlier channel $(l,c)$ if $o_c^{(l)}(\cS) > \tau^{(l)}$, and the \emph{covered set} of $\cS$ is
\begin{equation}
\Cov(\cS) = \bigl\{(l,c) \in \cC : o_c^{(l)}(\cS) > \tau^{(l)}\bigr\}.
\end{equation}
\end{definition}

When an outlier channel is not covered, the quantizer estimates its scale without having observed its true dynamic range, so the scale systematically clips the channel at inference. This error is not reducible by drawing more data of the same kind; it is reducible only by drawing data that activates the missed channels. The distinction makes coverage, rather than representativeness, the natural structural quantity for calibration selection.

\subsection{The Weighted Coverage Objective}

Binary coverage, however, discards useful information. Outlier channels vary substantially in their contribution to the reconstruction error of \Cref{eq:ptq_objective}: a channel whose activation is three orders of magnitude above the median contributes far more quantization damage when missed than one at the threshold. We therefore assign each outlier channel a weight
\begin{equation}
    w_{l,c}
    =
    \left(\frac{o_c^{(l,\mathrm{ref})}}{\tau^{(l)}}\right)^2
    \cdot
    \bigl\|\mathbf{W}^{(l)}_{:,c}\bigr\|_2,
    \label{eq:channel_weight}
\end{equation}
where $o_c^{(l,\mathrm{ref})}$ is the outlier magnitude of channel $(l,c)$ on the reference pool. The two factors in this expression correspond to distinct sources of per-channel contribution. The squared magnitude factor reflects that clipping error scales quadratically with activation magnitude, and the weight-column norm reflects how much the layer amplifies errors in channel $c$ into the layer output. The product is a first-order approximation to the per-channel contribution to \Cref{eq:ptq_objective}; the ablation in \Cref{sec:experiments} confirms that both factors matter empirically.

With this weighting in place, the calibration selection problem is
\begin{equation}
    \cS^{\ast}
    =
    \arg\max_{\cS \subseteq \cP,\ |\cS|\le K}
    \sum_{(l,c)\in \Cov(\cS)} w_{l,c}.
    \label{eq:set_cover_max}
\end{equation}
The structural property that makes this tractable is the following.

\begin{proposition}
\label{prop:submod}
The objective $F(\cS) = \sum_{(l,c)\in \Cov(\cS)} w_{l,c}$ is monotone submodular.
\end{proposition}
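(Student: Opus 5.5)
The plan is to rewrite $F$ as a weighted coverage function and then use the standard fact that such functions are monotone submodular. First, since $o_c^{(l)}(\cS)=\max_{s\in\cS}|X^{(l)}_{c,s}|$, the threshold condition $o_c^{(l)}(\cS)>\tau^{(l)}$ holds exactly when some $s\in\cS$ satisfies $|X^{(l)}_{c,s}|>\tau^{(l)}$. Define for each candidate $s\in\cP$ the set $A_s=\{(l,c)\in\cC : |X^{(l)}_{c,s}|>\tau^{(l)}\}$, the outlier channels that $s$ activates. Then $\Cov(\cS)=\bigcup_{s\in\cS}A_s$, and
\begin{equation*}
F(\cS)=\sum_{(l,c)\in\bigcup_{s\in\cS}A_s} w_{l,c}.
\end{equation*}
Here we use that the weights in \Cref{eq:channel_weight} are nonnegative, being a squared ratio times a column norm. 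With the convention $\Cov(\emptyset)=\emptyset$, we also have $F(\emptyset)=0$.

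Monotonicity is then immediate. If $\cS\subseteq\cT$, then $\bigcup_{s\in\cS}A_s\subseteq\bigcup_{s\in\cT}A_s$. Since all weights are nonnegative, $F(\cS)\le F(\cT)$.

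For submodularity I would verify diminishing returns directly. Fix $\cS\subseteq\cT\subseteq\cP$ and $x\in\cP\setminus\cT$. The marginal gain is
\begin{equation*}
F(\cS\cup\{x\})-F(\cS)=\sum_{(l,c)\in A_x\setminus \Cov(\cS)} w_{l,c}.
\end{equation*}
Because $\Cov(\cS)\subseteq\Cov(\cT)$, we have $A_x\setminus\Cov(\cT)\subseteq A_x\setminus\Cov(\cS)$. Nonnegativity of the weights then gives that the gain at $\cT$ is at most the gain at $\cS$.

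The only step needing care is the first one: checking that the max-based threshold condition really decomposes as a union over individual samples. It does because $\max_{s}a_s>\tau$ holds if and only if $\exists s: a_s>\tau$, for the finite sets used here. This is also the point where the proposition would fail if the weights were allowed to depend on $\cS$. So I would note explicitly that each $w_{l,c}$ is computed once on the fixed reference pool and is independent of the selected set.
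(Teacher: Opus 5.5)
Your proof is correct and follows the same route as the paper. Monotonicity comes from $\Cov(\cS)\subseteq\Cov(\mathcal{T})$, and submodularity from comparing the weighted mass of channels newly covered by $x$ relative to $\Cov(\cS)$ versus $\Cov(\mathcal{T})$. You also make explicit what the paper's proof uses without stating: that the max-threshold condition turns $\Cov(\cS)$ into the union $\bigcup_{s\in\cS}A_s$, and that the weights $w_{l,c}$ are nonnegative and do not depend on $\cS$.
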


\begin{proof}
Monotonicity follows because $\Cov(\cS) \subseteq \Cov(\cS \cup \{s\})$ for any $s \in \cP$, so $F(\cS \cup \{s\}) \geq F(\cS)$. For submodularity, let $\cA \subseteq \cB \subseteq \cP$ and $s \in \cP \setminus \cB$. The marginal gain $F(\cA \cup \{s\}) - F(\cA)$ is the weighted mass of channels covered by $s$ but not by $\cA$, and the analogous gain for $\cB$ is the weighted mass of channels covered by $s$ but not by $\cB$. Since $\Cov(\cA) \subseteq \Cov(\cB)$, the former is at least the latter.
\end{proof}

By \Cref{prop:submod} and the standard result of \citet{nemhauser1978analysis}, greedy selection achieves a $(1-1/e)$ approximation to the optimum of \Cref{eq:set_cover_max} under a cardinality constraint. This is the guarantee that justifies the algorithm in \Cref{sec:method}.

\subsection{A Surrogate Consistency Result}
\label{sec:surrogate}

The weighted coverage objective $F$ is a proxy: the quantity ultimately of interest is downstream accuracy, which depends on \Cref{eq:ptq_objective} and on the specific PTQ backend. We do not attempt to prove a guarantee on downstream accuracy, because doing so would require assumptions on the quantizer and on the model that go well beyond what is standard in the PTQ literature. We do, however, show that the particular weight choice in \Cref{eq:channel_weight} is consistent with the failure mode the method is designed to address, in the following sense.

Consider a stylized per-channel clipping model of PTQ error. Assume that each outlier channel $(l,c)$ has an associated nonnegative deficit $\delta_{l,c}(\cS) \geq 0$ with two properties. First, if $(l,c) \in \Cov(\cS)$, the calibration set contains a sample for which the channel's activation exceeds $\tau^{(l)}$, so the quantization scale captures its dynamic range; we model this as $\delta_{l,c}(\cS) = 0$. Second, if $(l,c) \notin \Cov(\cS)$, the channel is missed, and the resulting clipping deficit is bounded by the channel's normalized reference magnitude: $\delta_{l,c}(\cS) \leq o_c^{(l,\mathrm{ref})}/\tau^{(l)}$. These two conditions are modeling assumptions; we denote them (M1) and (M2). Under them, the associated surrogate loss is
\begin{equation}
\mathcal{L}_{\mathrm{sur}}(\cS)
=
\sum_{(l,c)\in \cC}
\bigl\|\mathbf{W}^{(l)}_{:,c}\bigr\|_2 \cdot \delta_{l,c}(\cS)^2.
\label{eq:surrogate_loss}
\end{equation}
This is not the Frobenius reconstruction loss of \Cref{eq:ptq_objective}; it is a stylized per-channel decomposition that isolates the failure mode \ours{} is designed to address, namely clipping of missed outlier channels. The following result ties the weighted coverage objective directly to this surrogate.

\begin{proposition}[Surrogate consistency]
\label{prop:surrogate}
Under (M1)--(M2),
\begin{equation}
\mathcal{L}_{\mathrm{sur}}(\cS)
\leq
\sum_{(l,c) \in \cC \setminus \Cov(\cS)} w_{l,c}
=
\sum_{(l,c)\in \cC} w_{l,c} - F(\cS).
\end{equation}
\end{proposition}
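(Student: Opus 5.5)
The plan is a term-by-term comparison after splitting the sum defining $\mathcal{L}_{\mathrm{sur}}(\cS)$ in \Cref{eq:surrogate_loss} over the partition $\cC = \Cov(\cS) \,\dot\cup\, (\cC \setminus \Cov(\cS))$. Every summand is a nonnegative column norm times a squared deficit, so handling each block separately and then summing gives the inequality. The final equality comes from the same partition applied to the total weight: since $F(\cS) = \sum_{(l,c)\in\Cov(\cS)} w_{l,c}$ and $\Cov(\cS)\subseteq\cC$, subtracting gives $\sum_{(l,c)\in\cC} w_{l,c} - F(\cS) = \sum_{(l,c)\in\cC\setminus\Cov(\cS)} w_{l,c}$.

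On the covered block $(l,c)\in\Cov(\cS)$, assumption (M1) gives $\delta_{l,c}(\cS)=0$, so each of these terms vanishes and the block contributes zero. On the uncovered block, (M2) together with $\delta_{l,c}(\cS)\ge 0$ lets us square both sides of $0\le\delta_{l,c}(\cS)\le o_c^{(l,\mathrm{ref})}/\tau^{(l)}$ and keep the inequality. This yields $\delta_{l,c}(\cS)^2 \le \bigl(o_c^{(l,\mathrm{ref})}/\tau^{(l)}\bigr)^2$. Multiplying by $\|\mathbf{W}^{(l)}_{:,c}\|_2 \ge 0$ gives exactly $w_{l,c}$ from \Cref{eq:channel_weight} as the upper bound, and summing over the uncovered channels gives the claim.

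There is no real obstacle here. The only steps that need care are the sign conditions. Squaring preserves the inequality only because $\delta_{l,c}(\cS)\ge 0$, which is part of the modeling assumptions. Multiplying by the column norm is safe because norms are nonnegative. It is also worth stating explicitly that $\tau^{(l)}>0$, so that the ratio in (M2) is well defined. This holds because $\tau^{(l)} = \mu^{(l)}+6\sigma^{(l)}$ lies above the typical absolute activation level, and in any case it is implicit in the definition of $w_{l,c}$.
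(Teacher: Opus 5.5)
Your proposal is correct and matches the paper's proof: the same split into covered channels, whose terms vanish by (M1), and uncovered channels, where squaring (M2) and multiplying by the nonnegative column norm gives exactly $w_{l,c}$, followed by summing over $\cC$. Your explicit handling of the signs and of the final equality via $\Cov(\cS)\subseteq\cC$ spells out what the paper leaves implicit. Your appeal to $\tau^{(l)}$ lying above typical activation levels is not needed beyond well-definedness, since $0\le\delta_{l,c}(\cS)\le o_c^{(l,\mathrm{ref})}/\tau^{(l)}$ already makes the right-hand side nonnegative for the squaring step.
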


\begin{proof}
If $(l,c)\in \Cov(\cS)$, then (M1) gives $\delta_{l,c}(\cS) = 0$, so the corresponding term in \Cref{eq:surrogate_loss} vanishes. If $(l,c)\notin \Cov(\cS)$, then (M2) gives
\[
\bigl\|\mathbf{W}^{(l)}_{:,c}\bigr\|_2 \cdot \delta_{l,c}(\cS)^2
\leq
\bigl\|\mathbf{W}^{(l)}_{:,c}\bigr\|_2 \cdot
\left(\frac{o_c^{(l,\mathrm{ref})}}{\tau^{(l)}}\right)^2
= w_{l,c}.
\]
Summing over $\cC$ yields the claim.
\end{proof}

\Cref{prop:surrogate} is deliberately narrow. It shows that $w_{l,c}$ in \Cref{eq:channel_weight} is not arbitrary: it is exactly the per-channel quantity that upper-bounds the corresponding surrogate contribution under (M1)--(M2), and maximizing $F$ therefore minimizes an upper bound on $\mathcal{L}_{\mathrm{sur}}$. The result is best read as a consistency check on the weight choice rather than as a bridge to the full PTQ objective. It does not show that $\mathcal{L}_{\mathrm{sur}}$ upper-bounds the Frobenius loss of \Cref{eq:ptq_objective}, that (M1) holds exactly in practice, or that reductions in $\mathcal{L}_{\mathrm{sur}}$ imply reductions in downstream accuracy loss. The empirical correspondence between $F$ and downstream metrics is the content of \Cref{sec:experiments,sec:analysis}, and a discussion of the assumptions and their limitations appears in \Cref{app:surrogate}.

\section{Method}
\label{sec:method}

\ours{} optimizes the weighted coverage objective of \Cref{eq:set_cover_max} through greedy selection. It consists of a one-time offline profiling phase, which computes the activation statistics and coverage matrices, and a cheap CPU-only selection phase, which runs the greedy update for a given calibration budget.

Given a candidate pool $\cP = \{s_1, \ldots, s_N\}$, the profiling phase performs a single FP16 forward pass and records the activation magnitudes $|X^{(l)}_{c,s_i}|$ for each layer $l$, channel $c$, and sample $s_i$. From these measurements we compute per-layer thresholds $\tau^{(l)} = \mu^{(l)} + 6\sigma^{(l)}$, identify the outlier-channel set $\cC$, and assemble the binary coverage matrix
\begin{equation}
\mathbf{M} \in \{0,1\}^{N \times |\cC|}, \qquad
M_{i,(l,c)} = \mathbf{1}\bigl[|X^{(l)}_{c,s_i}| > \tau^{(l)}\bigr],
\end{equation}
together with the weighted counterpart $\widetilde{\mathbf{M}}_{i,(l,c)} = M_{i,(l,c)} \cdot w_{l,c}$. For a 7B model with $N = 10{,}000$ sequences of length $2048$, the profiling pass requires approximately $15$ minutes on a single A100. All subsequent selection runs operate on the cached matrices and require no further GPU time, and because the matrices depend on the model but not on the downstream task, a single profiling pass amortizes across all calibration budgets and downstream evaluations for that model.

Selection itself proceeds by repeatedly choosing the sample whose marginal weighted coverage of currently uncovered channels is largest. The procedure is given in \Cref{alg:covercal}.

\begin{algorithm}[t]
\caption{\ours{}: greedy weighted coverage selection.}
\label{alg:covercal}
\begin{algorithmic}[1]
\REQUIRE Pool $\cP$, weighted coverage matrix $\widetilde{\mathbf{M}}$, budget $K$.
\ENSURE Calibration set $\cS$ with $|\cS| = K$.
\STATE $\cS \leftarrow \emptyset$
\STATE $\texttt{covered} \leftarrow \mathbf{0} \in \{0,1\}^{|\cC|}$
\FOR{$k = 1, \ldots, K$}
    \FOR{each $s_i \in \cP \setminus \cS$}
        \STATE $\texttt{gain}_i \leftarrow \sum_{j:\,\texttt{covered}_j = 0} \widetilde{M}_{i,j}$
    \ENDFOR
    \STATE $s^{\ast} \leftarrow \arg\max_{s_i \in \cP \setminus \cS} \texttt{gain}_i$
    \STATE $\cS \leftarrow \cS \cup \{s^{\ast}\}$
    \STATE $\texttt{covered}_j \leftarrow 1$ for all $j$ with $M_{s^{\ast},j} = 1$
\ENDFOR
\RETURN $\cS$
\end{algorithmic}
\end{algorithm}

The algorithm runs in $O(K \cdot N \cdot |\cC|)$ time and completes in under ten seconds on a modern CPU for the pool sizes and budgets considered here. Its formal guarantee follows directly from the structure established in \Cref{sec:formulation}.

\begin{theorem}
\label{thm:guarantee}
Let $\cS_K^{\ast}$ be an optimal solution to \Cref{eq:set_cover_max}, and let $\cS_K^{\mathrm{greedy}}$ be the output of \Cref{alg:covercal}. Then $F\bigl(\cS_K^{\mathrm{greedy}}\bigr) \geq (1 - 1/e)\,F\bigl(\cS_K^{\ast}\bigr)$.
\end{theorem}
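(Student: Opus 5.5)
The plan is to reduce to the classical Nemhauser--Wolsey--Fisher argument, using \Cref{prop:submod} for monotonicity and submodularity. I would first check that \Cref{alg:covercal} really is the standard greedy rule for $F$. At iteration $k$, with current set $\cS_{k-1}$, the quantity $\texttt{gain}_i$ equals the total weight of channels that $s_i$ covers and that are not yet marked covered. I would show by induction that the \texttt{covered} vector is exactly the indicator of $\Cov(\cS_{k-1})$. This needs a separate small observation: since $o_c^{(l)}$ is a maximum over samples, $\Cov(\cS) = \bigcup_{s\in\cS}\Cov(\{s\})$, and $\Cov(\{s_i\})$ is precisely the support of row $i$ of $\mathbf{M}$. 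It follows that $\texttt{gain}_i = F(\cS_{k-1}\cup\{s_i\}) - F(\cS_{k-1})$. I would also note that $F(\emptyset)=0$ and that every weight is nonnegative by \Cref{eq:channel_weight}.

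Next comes the key inequality. Let $\cS^\ast = \cS_K^\ast$ and $\cS_k$ be the greedy set after $k$ steps. By monotonicity, $F(\cS^\ast) \le F(\cS_k \cup \cS^\ast)$. Adding the elements of $\cS^\ast\setminus\cS_k$ one at a time and applying submodularity to each increment gives
\[
F(\cS^\ast) - F(\cS_k) \le \sum_{s\in\cS^\ast\setminus\cS_k}\bigl(F(\cS_k\cup\{s\})-F(\cS_k)\bigr) \le K\,\bigl(F(\cS_{k+1})-F(\cS_k)\bigr).
\]
The last step uses that greedy picks the maximum marginal gain, together with $|\cS^\ast|\le K$. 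Setting $\Delta_k = F(\cS^\ast)-F(\cS_k)$, this rearranges to $\Delta_{k+1}\le(1-1/K)\Delta_k$.

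Iterating from $\Delta_0 = F(\cS^\ast)$ gives $\Delta_K \le (1-1/K)^K F(\cS^\ast) \le e^{-1}F(\cS^\ast)$, which is the claim.

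The main obstacle is the bookkeeping in the first step, not the inequality chain. One edge case needs care: if $|\cP|<K$, the algorithm cannot output $K$ samples. I would either assume $K\le N$ or observe that in that case greedy returns all of $\cP$, which is optimal by monotonicity. Ties in the argmax are harmless, since any maximizer satisfies the inequality. The case where the pool is exhausted before all marginal gains vanish only strengthens the bound, since extra zero-gain picks do not decrease $F$.
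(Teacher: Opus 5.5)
Your proposal is correct and takes the same route as the paper: the paper just invokes \Cref{prop:submod} together with the greedy bound of \citet{nemhauser1978analysis}. You derive that classical argument inline, and you also spell out a step the paper leaves implicit, namely that the \texttt{gain} computed in \Cref{alg:covercal} equals the marginal gain of $F$, since $\Cov(\cS)=\bigcup_{s\in\cS}\Cov(\{s\})$.
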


\Cref{thm:guarantee} is an immediate consequence of \Cref{prop:submod} and the greedy submodular maximization bound of \citet{nemhauser1978analysis}. Combined with \Cref{prop:surrogate}, greedy maximization of $F$ yields a set whose associated surrogate loss is bounded in terms of the optimum-achievable weighted coverage, and the bound is informative in the regime where $F(\cS_K^{\ast})$ is large relative to the total weighted mass $\sum_{(l,c)} w_{l,c}$. As noted in \Cref{sec:surrogate}, this is an internal consistency result rather than a guarantee on downstream accuracy, and the empirical correspondence between the two is the subject of the next section.

A variant of the algorithm using an adaptive threshold is described in \Cref{app:adaptive}. The adaptive procedure uses an initial \ours{} selection to perform PTQ, measures per-layer reconstruction errors, lowers the outlier threshold for layers whose error exceeds the median, and reruns selection. The main experiments use the fixed $6\sigma$ threshold throughout, and the threshold ablation in \Cref{sec:experiments} shows performance stability across the $5\sigma$--$7\sigma$ range, which limits the motivation for adaptive refinement at INT4. We expect the procedure to become more relevant at more aggressive bit-widths.

\section{Experiments}
\label{sec:experiments}

We evaluate \ours{} across three model families, two PTQ backends, multiple calibration budgets, and five downstream tasks, aiming to answer three questions in sequence. First, does the coverage-based selection produce measurable downstream improvements over standard heuristics (\Cref{sec:main_results})? Second, is the advantage budget-dependent in the way the coverage view predicts, with the largest gains at small budgets where random calibration is most likely to miss outlier channels (\Cref{sec:budget})? And third, do the improvements come from the weight choice and threshold specifically, or are they driven by coverage alone (\Cref{sec:ablations})?

The models evaluated are LLaMA-2-7B and LLaMA-2-13B~\citep{touvron2023llama2}, LLaMA-3-8B~\citep{meta2024llama3}, and Mistral-7B-v0.3~\citep{jiang2023mistral}, chosen to span distinct outlier regimes. We use two PTQ backends drawn from the two dominant families in the literature: AWQ~\citep{lin2023awq} at INT4 with group size $128$ as a representative activation-aware method, and GPTQ~\citep{frantar2022gptq} at INT4 with group size $128$ as a representative second-order method. The calibration baselines against which \ours{} is compared are Random sampling from C4 averaged over five seeds (the standard practice), Max-PPL selection of samples with highest FP16 perplexity, Max-ActVar selection of samples with highest activation variance across layers, and Stratified selection matching the global activation-magnitude distribution of the pool. Downstream quality is measured on MMLU (5-shot)~\citep{hendrycks2021mmlu}, ARC-Challenge (25-shot)~\citep{clark2018arc}, HellaSwag (10-shot)~\citep{zellers2019hellaswag}, and WinoGrande (5-shot)~\citep{sakaguchi2020winogrande}, together with WikiText-2 perplexity. The candidate pool contains $N = 10{,}000$ sequences of length $2048$ from RedPajama~\citep{together2023redpajama}, and we evaluate calibration budgets $K \in \{32, 64, 128, 256\}$.

\subsection{Main Results}
\label{sec:main_results}

\begin{table}[t]
\centering
\caption{Downstream accuracy (\%) after AWQ INT4 quantization with $K = 128$ calibration samples. FP16 denotes the unquantized baseline. Best quantized result per model in bold. Results averaged over three seeds; per-cell standard deviations are below $0.3$, so the standard error is approximately $0.17$ and the \ours{}-versus-Random gap on MMLU ($1.2$--$1.5$ points) corresponds to more than three standard errors on every model. The corresponding GPTQ results appear in \Cref{app:gptq}.}
\label{tab:main_results_awq}
\small
\begin{tabular}{llccccc}
\toprule
\textbf{Model} & \textbf{Calibration} & \textbf{MMLU} & \textbf{ARC-C} & \textbf{HellaSwag} & \textbf{WinoGr.} & \textbf{Wiki-PPL}$\downarrow$ \\
\midrule
\multirow{6}{*}{LLaMA-2-7B}
& FP16 & 46.0 & 53.1 & 78.6 & 72.1 & 5.47 \\
& Random & 44.2 & 51.3 & 77.1 & 70.8 & 5.68 \\
& Max-PPL & 44.5 & 51.0 & 77.0 & 70.5 & 5.71 \\
& Max-ActVar & 44.8 & 51.6 & 77.4 & 71.0 & 5.63 \\
& Stratified & 44.6 & 51.5 & 77.3 & 70.9 & 5.64 \\
& \textbf{\ours{}} & \textbf{45.4} & \textbf{52.5} & \textbf{78.0} & \textbf{71.6} & \textbf{5.53} \\
\midrule
\multirow{6}{*}{LLaMA-3-8B}
& FP16 & 65.3 & 59.4 & 82.1 & 77.3 & 6.14 \\
& Random & 63.1 & 57.2 & 80.5 & 75.8 & 6.42 \\
& Max-PPL & 63.4 & 57.0 & 80.3 & 75.5 & 6.45 \\
& Max-ActVar & 63.8 & 57.6 & 80.8 & 76.1 & 6.38 \\
& Stratified & 63.5 & 57.4 & 80.7 & 76.0 & 6.39 \\
& \textbf{\ours{}} & \textbf{64.6} & \textbf{58.7} & \textbf{81.5} & \textbf{76.9} & \textbf{6.22} \\
\midrule
\multirow{6}{*}{Mistral-7B}
& FP16 & 62.5 & 61.2 & 83.3 & 76.8 & 5.25 \\
& Random & 60.8 & 59.5 & 81.9 & 75.4 & 5.48 \\
& Max-PPL & 61.0 & 59.3 & 81.7 & 75.2 & 5.51 \\
& Max-ActVar & 61.3 & 59.8 & 82.1 & 75.6 & 5.44 \\
& Stratified & 61.1 & 59.7 & 82.0 & 75.5 & 5.45 \\
& \textbf{\ours{}} & \textbf{62.0} & \textbf{60.7} & \textbf{82.8} & \textbf{76.3} & \textbf{5.31} \\
\bottomrule
\end{tabular}
\end{table}

\Cref{tab:main_results_awq} reports downstream accuracy across the three model families under AWQ INT4. \ours{} improves over every calibration baseline on every benchmark for every model evaluated. The MMLU gain over random calibration is $1.2$ points on LLaMA-2-7B, $1.5$ on LLaMA-3-8B, and $1.2$ on Mistral-7B; comparable gains appear across ARC-Challenge, HellaSwag, and WinoGrande, and WikiText-2 perplexity improves by $0.11$--$0.20$. The corresponding GPTQ results (\Cref{app:gptq}) exhibit the same pattern across all three models, indicating that the benefit derives from the calibration set rather than from a backend-specific interaction. Max-ActVar is the strongest heuristic baseline, but yields roughly half of \ours{}'s gain; we trace this gap to redundancy in its selection, which we analyze in \Cref{sec:analysis}.

\subsection{Budget Efficiency}
\label{sec:budget}

If the coverage hypothesis is correct, the advantage of \ours{} should be most pronounced at small budgets, where random calibration is most likely to miss outlier channels entirely, and should contract at larger budgets, where random sampling begins to cover most channels by chance. Sweeping $K \in \{32, 64, 128, 256\}$ on LLaMA-3-8B with AWQ confirms this prediction: at $K = 64$, \ours{} reaches $64.0$ MMLU, exceeding random calibration at $K = 256$ ($63.8$) and representing a fourfold reduction in sample budget for equivalent accuracy. The advantage over Max-ActVar is similarly largest at the smallest budget and narrows as $K$ grows. Read through the coverage lens, the saturation of \ours{} at roughly $K = 128$ is not a weakness but a consequence of the objective being near-complete at that budget: once the high-weight channels are covered, additional samples provide little incremental coverage mass. The full curve and corresponding numeric table appear in \Cref{app:budget}.

\subsection{Ablations and Regimes}
\label{sec:ablations}

Two natural questions about the formulation concern the specific weight choice and the sensitivity to the outlier threshold, and a third concerns the regimes in which the advantage narrows. On the weight choice, a component-wise decomposition of \Cref{eq:channel_weight} on LLaMA-3-8B (AWQ INT4, $K = 128$) shows that unweighted coverage alone already improves over the random baseline but that neither the squared-magnitude factor nor the weight-column norm alone matches the full product: each contributes roughly half of the gap from unweighted to fully-weighted coverage, consistent with the surrogate analysis of \Cref{sec:surrogate} in which both factors appear in the bound and neither is redundant. On threshold sensitivity, varying $\tau^{(l)}$ from $4\sigma$ to $8\sigma$ leaves performance stable across the $5\sigma$--$7\sigma$ range, with $4\sigma$ diluting the signal by admitting non-outlier channels and $8\sigma$ excluding moderate but still important outliers; the $6\sigma$ choice used throughout is near-optimal, and the flatness of the curve around it indicates that fine-grained threshold tuning is not necessary at INT4. On the regime question, the advantage of \ours{} over random calibration is task- and budget-dependent: on Mistral-7B with GPTQ at $K = 256$ the MMLU gap narrows to $0.4$ points (random sampling covers most outlier channels by chance at that budget), and on WinoGrande the gains are smallest across benchmarks, suggesting that tasks less sensitive to precise numerical reconstruction benefit less from outlier-focused calibration. Both patterns are expected under the coverage view rather than counter to it. Full ablation tables are reported in \Cref{app:ablations}.

\section{Analysis}
\label{sec:analysis}

The results of \Cref{sec:experiments} establish that \ours{} improves downstream accuracy over strong baselines, but they do not by themselves confirm the mechanism the paper proposes. This section provides the missing link by measuring directly the quantity on which the coverage framing rests, and by explaining why the strongest heuristic baseline underperforms despite being well-motivated.

\begin{table}[h]
\centering
\small
\caption{Outlier-channel coverage achieved by each calibration method at $K = 128$.}
\label{tab:coverage_profiles}
\begin{tabular}{lcc}
\toprule
\textbf{Method} & \textbf{Channel coverage (\%)} & \textbf{Weighted coverage (\%)} \\
\midrule
Random & 16.8 & 14.7 \\
Max-PPL & 9.9 & 8.6 \\
Max-ActVar & 10.4 & 9.7 \\
\textbf{\ours{}} & \textbf{26.9} & \textbf{24.8} \\
\bottomrule
\end{tabular}
\end{table}

\Cref{tab:coverage_profiles} reports the fraction of outlier channels covered by each calibration method at a fixed budget. \ours{} covers $26.9\%$ of all outlier channels compared to $16.8\%$ for random calibration, a $60\%$ relative improvement that widens to $69\%$ under the weighted metric ($24.8\%$ versus $14.7\%$). The rank ordering of methods on coverage matches their rank ordering on downstream accuracy in \Cref{tab:main_results_awq,tab:main_results_gptq}, and the magnitudes track as well. This is the specific empirical content of the paper's central structural claim: at fixed budget, methods that cover more weighted outlier mass quantize better.

The behavior of Max-ActVar is the puzzle posed by this table. Its criterion is plausible---samples with higher activation variance should include outlier channels---yet at $K = 128$ it covers only $10.4\%$ of outlier channels, less than random sampling. The explanation lies in redundancy: the average pairwise Jaccard similarity between coverage sets of selected samples is $J_{\text{Random}} = 0.34$, $J_{\text{Max-ActVar}} = 0.52$, and $J_{\ours{}} = 0.11$. Max-ActVar selections are individually informative but collectively overlapping, reactivating the same high-variance channels rather than spanning the outlier space, and this redundancy accounts for most of its empirical gap relative to \ours{}. A qualitative inspection of the samples \ours{} selects further shows that code, multilingual text, and mathematical notation activate outliers in different layer ranges; we defer this analysis to \Cref{app:qualitative}.

\section{Discussion}
\label{sec:discussion}

The algorithmic contribution of \ours{} is not sophistication but alignment with structure: once calibration selection is formalized as weighted coverage of outlier channels, greedy submodular maximization is the natural optimization tool, and its approximation guarantee follows from classical results. The substantive contribution lies upstream of the algorithm, in the formulation itself and in the surrogate consistency result that ties the specific weight choice to the per-channel clipping failure mode the method addresses. The practical cost of this approach is modest: a one-time offline profiling pass on the pool of roughly $15$ minutes on a single A100 for a 7B model, after which greedy selection is CPU-only and completes in under ten seconds for any calibration budget (\Cref{app:compute}); a detailed discussion of the formulation's limitations appears in \Cref{app:limitations}.

\section{Conclusion}
\label{sec:conclusion}

This paper formulated calibration data selection for post-training quantization as a weighted set cover problem over outlier channels. The associated greedy algorithm, \ours{}, enjoys a classical $(1-1/e)$ approximation guarantee on the induced coverage objective, and the specific weight choice is pinned by a surrogate consistency result tying missed weighted coverage to a stylized per-channel clipping model of PTQ error. Across three model families, two PTQ backends, and five downstream benchmarks, \ours{} improves over random, max-perplexity, max-activation-variance, and stratified baselines, with the largest gains at small budgets where random calibration is most likely to miss high-impact outlier channels. The analysis traces these gains to measured differences in outlier coverage and explains why the strongest heuristic baseline underperforms. The broader implication is that effective calibration is better understood as coverage of activation structure than as representativeness of data distributions.

\paragraph{Ethics statement.}
Better quantization improves accessibility and reduces deployment cost, but calibration choices can interact with subgroup performance in non-obvious ways. \ours{} increases the diversity of activation patterns in the calibration set, which mitigates but does not eliminate this risk; practitioners deploying quantized models should evaluate subgroup-specific behavior on their own deployment populations.

\bibliographystyle{plainnat}
\bibliography{references}

\begin{thebibliography}{32}
\providecommand{\natexlab}[1]{#1}
\providecommand{\url}[1]{\texttt{#1}}
\expandafter\ifx\csname urlstyle\endcsname\relax
  \providecommand{\doi}[1]{doi: #1}\else
  \providecommand{\doi}{doi: \begingroup \urlstyle{rm}\Url}\fi

\bibitem[Ashkboos et~al.(2024)Ashkboos, Mohtashami, Croci, Li, Cameron, Jaggi, Alistarh, Hoefler, and Hensman]{ashkboos2024quarot}
Saleh Ashkboos, Amirkeivan Mohtashami, Maximilian Croci, Bo~Li, Pashmina Cameron, Martin Jaggi, Dan Alistarh, Torsten Hoefler, and James Hensman.
\newblock {QuaRot}: Outlier-free 4-bit inference in rotated {LLMs}.
\newblock In \emph{NeurIPS}, 2024.

\bibitem[Bondarenko et~al.(2024)Bondarenko, Nagel, and Blankevoort]{bondarenko2024quantizable}
Yelysei Bondarenko, Markus Nagel, and Tijmen Blankevoort.
\newblock Quantizable transformers: Removing outliers by helping attention heads do nothing.
\newblock In \emph{ICLR}, 2024.

\bibitem[Chee et~al.(2023)Chee, Cai, Kuleshov, and De~Sa]{chee2023quip}
Jerry Chee, Yaohui Cai, Volodymyr Kuleshov, and Christopher De~Sa.
\newblock {QuIP}: 2-bit quantization of large language models with guarantees.
\newblock In \emph{NeurIPS}, 2023.

\bibitem[Chv{\'a}tal(1979)]{chvatal1979greedy}
Va{\v{s}}ek Chv{\'a}tal.
\newblock A greedy heuristic for the set-covering problem.
\newblock \emph{Mathematics of Operations Research}, 4\penalty0 (3):\penalty0 233--235, 1979.

\bibitem[Clark et~al.(2018)Clark, Cowhey, Etzioni, Khot, Sabharwal, Schoenick, and Tafjord]{clark2018arc}
Peter Clark, Isaac Cowhey, Oren Etzioni, Tushar Khot, Ashish Sabharwal, Carissa Schoenick, and Oyvind Tafjord.
\newblock Think you have solved question answering? {T}ry {ARC}.
\newblock \emph{arXiv:1803.05457}, 2018.

\bibitem[Dettmers et~al.(2022)Dettmers, Lewis, Belkada, and Zettlemoyer]{dettmers2022gpt3}
Tim Dettmers, Mike Lewis, Younes Belkada, and Luke Zettlemoyer.
\newblock {GPT3.int8()}: 8-bit matrix multiplication for transformers at scale.
\newblock In \emph{NeurIPS}, 2022.

\bibitem[Dettmers et~al.(2023)Dettmers, Svirschevski, Egiazarian, Kuznedelev, Frantar, Ashkboos, Borzunov, Hoefler, and Alistarh]{dettmers2023spqr}
Tim Dettmers, Ruslan Svirschevski, Vage Egiazarian, Denis Kuznedelev, Elias Frantar, Saleh Ashkboos, Alexander Borzunov, Torsten Hoefler, and Dan Alistarh.
\newblock {SpQR}: A sparse-quantized representation for near-lossless {LLM} weight compression.
\newblock \emph{arXiv:2306.03078}, 2023.

\bibitem[Frantar et~al.(2023)Frantar, Ashkboos, Hoefler, and Alistarh]{frantar2022gptq}
Elias Frantar, Saleh Ashkboos, Torsten Hoefler, and Dan Alistarh.
\newblock {GPTQ}: Accurate post-training quantization for generative pre-trained transformers.
\newblock In \emph{ICLR}, 2023.

\bibitem[He et~al.(2025)He, Xiao, and Han]{preservecalib2025}
Bowei He, Guangxuan Xiao, and Song Han.
\newblock Preserving {LLM} capabilities through calibration data curation: From analysis to optimization.
\newblock \emph{arXiv:2510.10618}, 2025.

\bibitem[Hendrycks et~al.(2021)Hendrycks, Burns, Basart, Zou, Mazeika, Song, and Steinhardt]{hendrycks2021mmlu}
Dan Hendrycks, Collin Burns, Steven Basart, Andy Zou, Mantas Mazeika, Dawn Song, and Jacob Steinhardt.
\newblock Measuring massive multitask language understanding.
\newblock In \emph{ICLR}, 2021.

\bibitem[Jiang et~al.(2023)Jiang, Sablayrolles, Mensch, Bamford, Chaplot, de~las Casas, Bressand, Lengyel, Lample, Saulnier, Lavaud, Lachaux, Stock, Le~Scao, Lavril, Wang, Lacroix, and El~Sayed]{jiang2023mistral}
Albert~Q. Jiang, Alexandre Sablayrolles, Arthur Mensch, Chris Bamford, Devendra~Singh Chaplot, Diego de~las Casas, Florian Bressand, Gianna Lengyel, Guillaume Lample, Lucile Saulnier, L{\'e}lio~Renard Lavaud, Marie-Anne Lachaux, Pierre Stock, Teven Le~Scao, Thibaut Lavril, Thomas Wang, Timoth{\'e}e Lacroix, and William El~Sayed.
\newblock Mistral 7b.
\newblock \emph{arXiv:2310.06825}, 2023.

\bibitem[Killamsetty et~al.(2021)Killamsetty, Sivasubramanian, Ramakrishnan, De, and Iyer]{killamsetty2021glister}
Krishnateja Killamsetty, Durga Sivasubramanian, Ganesh Ramakrishnan, Abir De, and Rishabh Iyer.
\newblock {GLISTER}: Generalization based data subset selection for efficient and robust learning.
\newblock In \emph{AAAI}, 2021.

\bibitem[Kim et~al.(2023)Kim, Hooper, Gholami, Dong, Li, Shen, Mahoney, and Keutzer]{kim2023squeezellm}
Sehoon Kim, Coleman Hooper, Amir Gholami, Zhen Dong, Xiuyu Li, Sheng Shen, Michael~W. Mahoney, and Kurt Keutzer.
\newblock {SqueezeLLM}: Dense-and-sparse quantization.
\newblock \emph{arXiv:2306.07629}, 2023.

\bibitem[Klemen and Robnik-{\v{S}}ikonja(2023)]{impactcalib2023}
Matej Klemen and Marko Robnik-{\v{S}}ikonja.
\newblock On the impact of calibration data in post-training quantization and pruning.
\newblock \emph{arXiv:2311.09755}, 2023.

\bibitem[Lin et~al.(2024)Lin, Tang, Tang, Yang, Dang, and Han]{lin2023awq}
Ji~Lin, Jiaming Tang, Haotian Tang, Shang Yang, Xingyu Dang, and Song Han.
\newblock {AWQ}: Activation-aware weight quantization for {LLM} compression and acceleration.
\newblock In \emph{MLSys}, 2024.

\bibitem[Liu et~al.(2024)Liu, Zhao, Iandola, Lai, Tian, Fedorov, Xiong, Chang, Shi, Krishnamoorthi, Lai, and Chandra]{liu2024spinquant}
Zechun Liu, Changsheng Zhao, Forrest Iandola, Chen Lai, Yuandong Tian, Igor Fedorov, Yunyang Xiong, Ernie Chang, Yangyang Shi, Raghuraman Krishnamoorthi, Liangzhen Lai, and Vikas Chandra.
\newblock {SpinQuant}: {LLM} quantization with learned rotations.
\newblock \emph{arXiv:2405.16406}, 2024.

\bibitem[{Meta AI}(2024)]{meta2024llama3}
{Meta AI}.
\newblock Llama 3 model card, 2024.

\bibitem[Mirzasoleiman et~al.(2020)Mirzasoleiman, Bilmes, and Leskovec]{mirzasoleiman2020coresets}
Baharan Mirzasoleiman, Jeff Bilmes, and Jure Leskovec.
\newblock Coresets for data-efficient training of machine learning models.
\newblock In \emph{ICML}, 2020.

\bibitem[Monaco et~al.(2026)Monaco, Vinzamuri, Choudhary, and Reddi]{calicuration2025}
Francesco~Pio Monaco, Bhanukiran Vinzamuri, Dhruv Choudhary, and Sashank~J. Reddi.
\newblock Frequency matters: Fast model-agnostic data curation for pruning and quantization.
\newblock \emph{arXiv:2603.16105}, 2026.

\bibitem[Nemhauser et~al.(1978)Nemhauser, Wolsey, and Fisher]{nemhauser1978analysis}
George~L. Nemhauser, Laurence~A. Wolsey, and Marshall~L. Fisher.
\newblock An analysis of approximations for maximizing submodular set functions---{I}.
\newblock \emph{Mathematical Programming}, 14\penalty0 (1):\penalty0 265--294, 1978.

\bibitem[Panferov et~al.(2024)Panferov, Borzunov, and Ushakov]{wiliamson2024diminishing}
Andrei Panferov, Alexander Borzunov, and Artem Ushakov.
\newblock Outliers and calibration sets have diminishing effect on quantization of modern {LLMs}.
\newblock \emph{arXiv:2405.20835}, 2024.

\bibitem[Sakaguchi et~al.(2020)Sakaguchi, Le~Bras, Bhagavatula, and Choi]{sakaguchi2020winogrande}
Keisuke Sakaguchi, Ronan Le~Bras, Chandra Bhagavatula, and Yejin Choi.
\newblock {WinoGrande}: An adversarial {W}inograd schema challenge at scale.
\newblock In \emph{AAAI}, 2020.

\bibitem[Sener and Savarese(2018)]{sener2018active}
Ozan Sener and Silvio Savarese.
\newblock Active learning for convolutional neural networks: A core-set approach.
\newblock In \emph{ICLR}, 2018.

\bibitem[Settles(2009)]{settles2009active}
Burr Settles.
\newblock Active learning literature survey.
\newblock Technical report, University of Wisconsin--Madison, 2009.

\bibitem[Shihab et~al.(2026)Shihab, Akter, and Sharma]{shihab2026fisher}
Ibne~Farabi Shihab, Sanjeda Akter, and Anuj Sharma.
\newblock Beyond variance: Knowledge-aware {LLM} compression via {F}isher-aligned subspace diagnostics.
\newblock \emph{arXiv:2601.07197}, 2026.

\bibitem[{Together AI}(2023)]{together2023redpajama}
{Together AI}.
\newblock Redpajama: An open dataset for training large language models, 2023.

\bibitem[Touvron et~al.(2023)Touvron, Martin, Stone, Albert, Almahairi, Babaei, Bashlykov, Batra, Bhargava, Bhosale, Bikel, Luber, Casas, Denoyer, Drozd, Elbaum, Esiobu, Ferrer, Goyal, and Hartshorn]{touvron2023llama2}
Hugo Touvron, Louis Martin, Kevin Stone, Peter Albert, Amjad Almahairi, Yasmine Babaei, Nikolay Bashlykov, Soumya Batra, Prajjwal Bhargava, Shruti Bhosale, Dan Bikel, Irene Luber, Juan Casas, Ludovic Denoyer, Oleksandr Drozd, Sergey Elbaum, David Esiobu, Cynthia~Breazeal Ferrer, Naman Goyal, and Graham Hartshorn.
\newblock Llama 2: Open foundation and fine-tuned chat models.
\newblock \emph{arXiv:2307.09288}, 2023.

\bibitem[Williams et~al.(2025)Williams, Chrysostomou, and Aletras]{selfcalib2024}
Miles Williams, George Chrysostomou, and Nikolaos Aletras.
\newblock Self-calibration for language model quantization and pruning.
\newblock In \emph{NAACL}, 2025.
\newblock arXiv:2410.17170.

\bibitem[Xiao et~al.(2023)Xiao, Lin, Seznec, Wu, Demouth, and Han]{xiao2023smoothquant}
Guangxuan Xiao, Ji~Lin, Mickael Seznec, Hao Wu, Julien Demouth, and Song Han.
\newblock {SmoothQuant}: Accurate and efficient post-training quantization for large language models.
\newblock In \emph{ICML}, 2023.

\bibitem[Xie et~al.(2023)Xie, Santurkar, Ma, and Liang]{xie2023data}
Sang~Michael Xie, Shibani Santurkar, Tengyu Ma, and Percy Liang.
\newblock Data selection for language models via importance resampling.
\newblock In \emph{NeurIPS}, 2023.

\bibitem[Zandieh et~al.(2026)Zandieh, Daliri, Hadian, and Mirrokni]{zandieh2025turboquant}
Amir Zandieh, Majid Daliri, Amin Hadian, and Vahab Mirrokni.
\newblock {TurboQuant}: Online vector quantization with near-optimal distortion rate.
\newblock In \emph{ICLR}, 2026.

\bibitem[Zellers et~al.(2019)Zellers, Holtzman, Bisk, Farhadi, and Choi]{zellers2019hellaswag}
Rowan Zellers, Ari Holtzman, Yonatan Bisk, Ali Farhadi, and Yejin Choi.
\newblock {HellaSwag}: Can a machine really finish your sentence?
\newblock In \emph{ACL}, 2019.

\end{thebibliography}

\appendix

\section{Extended Results: LLaMA-2-13B}
\label{app:13b}

\begin{table}[h]
\centering
\small
\caption{LLaMA-2-13B results under AWQ INT4 quantization with $K = 128$ calibration samples.}
\begin{tabular}{lccccc}
\toprule
\textbf{Calibration} & \textbf{MMLU} & \textbf{ARC-C} & \textbf{HellaSwag} & \textbf{WinoGr.} & \textbf{Wiki-PPL}$\downarrow$ \\
\midrule
FP16 & 55.0 & 59.4 & 82.1 & 76.0 & 4.88 \\
Random & 53.5 & 57.8 & 80.8 & 74.7 & 5.08 \\
Max-ActVar & 54.0 & 58.2 & 81.1 & 75.0 & 5.03 \\
\textbf{\ours{}} & \textbf{54.6} & \textbf{58.9} & \textbf{81.7} & \textbf{75.6} & \textbf{4.94} \\
\bottomrule
\end{tabular}
\end{table}

The pattern observed on the 7B and 8B models carries over to the 13B scale, with \ours{} recovering a comparable fraction of the FP16-to-INT4 accuracy gap across all evaluated benchmarks.

\section{GPTQ Backend Results}
\label{app:gptq}

\Cref{sec:main_results} reports AWQ results in full and summarizes the GPTQ results as exhibiting the same pattern. The full GPTQ numbers are given here.

\begin{table}[h]
\centering
\small
\caption{GPTQ INT4 results under the same setup as \Cref{tab:main_results_awq} ($K = 128$ calibration samples, three seeds). Best result per model in bold. For compactness we report the two strongest baselines (Random and Max-ActVar); Max-PPL and Stratified land between Random and Max-ActVar, mirroring the AWQ pattern.}
\label{tab:main_results_gptq}
\begin{tabular}{llccccc}
\toprule
\textbf{Model} & \textbf{Calibration} & \textbf{MMLU} & \textbf{ARC-C} & \textbf{HellaSwag} & \textbf{WinoGr.} & \textbf{Wiki-PPL}$\downarrow$ \\
\midrule
\multirow{3}{*}{LLaMA-2-7B}
& Random & 43.8 & 50.7 & 76.5 & 70.2 & 5.78 \\
& Max-ActVar & 44.3 & 51.1 & 76.9 & 70.6 & 5.72 \\
& \textbf{\ours{}} & \textbf{45.0} & \textbf{52.0} & \textbf{77.6} & \textbf{71.3} & \textbf{5.58} \\
\midrule
\multirow{3}{*}{LLaMA-3-8B}
& Random & 62.5 & 56.8 & 79.9 & 75.2 & 6.53 \\
& Max-ActVar & 63.2 & 57.1 & 80.3 & 75.7 & 6.46 \\
& \textbf{\ours{}} & \textbf{64.1} & \textbf{58.2} & \textbf{81.1} & \textbf{76.5} & \textbf{6.28} \\
\midrule
\multirow{3}{*}{Mistral-7B}
& Random & 60.2 & 59.0 & 81.3 & 74.9 & 5.57 \\
& Max-ActVar & 60.8 & 59.4 & 81.7 & 75.3 & 5.51 \\
& \textbf{\ours{}} & \textbf{61.6} & \textbf{60.3} & \textbf{82.4} & \textbf{76.0} & \textbf{5.36} \\
\bottomrule
\end{tabular}
\end{table}

The absolute accuracies under GPTQ are slightly lower than under AWQ for every method, reflecting GPTQ's greater sensitivity to activation statistics at INT4. The relative ordering of methods is identical across backends, and the magnitude of the \ours{}-versus-Random gap is comparable ($1.1$--$1.6$ MMLU points under GPTQ, versus $1.2$--$1.5$ under AWQ), which is the basis for the claim in \Cref{sec:main_results} that the benefit is backend-independent.

\section{Budget Efficiency: Full Curve}
\label{app:budget}

The summary in \Cref{sec:budget} reports the headline numeric comparison; here we include the full budget curve and corresponding table.

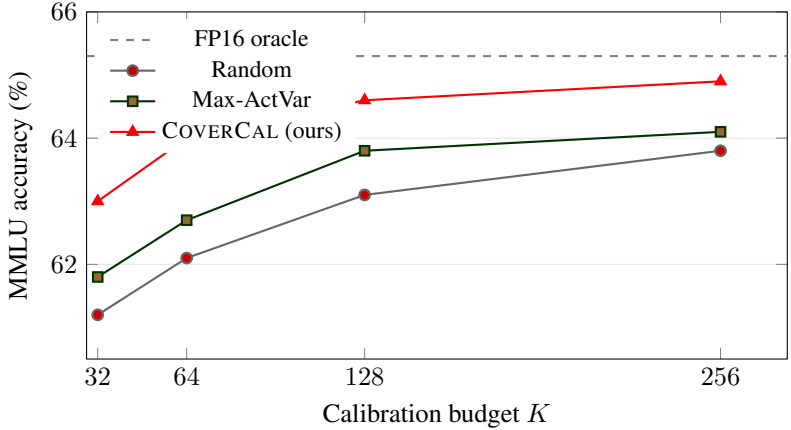
\begin{figure}[h]
\centering
\begin{tikzpicture}
\begin{axis}[
    width=0.78\textwidth, height=6.2cm,
    xlabel={Calibration budget $K$},
    ylabel={MMLU accuracy (\%)},
    xmin=28, xmax=280, ymin=60.5, ymax=66.0,
    xtick={32,64,128,256},
    xticklabels={$32$,$64$,$128$,$256$},
    ymajorgrids=true, grid style={gray!20},
    legend style={draw=none, at={(0.02,0.98)}, anchor=north west, fill=white, font=\footnotesize},
]
\addplot+[dashed, thick, gray, mark=none, domain=28:280] {65.3};
\addlegendentry{FP16 oracle}
\addplot+[mark=*, thick, mark size=2, color=black!60] coordinates {(32,61.2) (64,62.1) (128,63.1) (256,63.8)};
\addlegendentry{Random}
\addplot+[mark=square*, thick, mark size=1.8, color=black!80!green] coordinates {(32,61.8) (64,62.7) (128,63.8) (256,64.1)};
\addlegendentry{Max-ActVar}
\addplot+[mark=triangle*, thick, mark size=2.2, color=red] coordinates {(32,63.0) (64,64.0) (128,64.6) (256,64.9)};
\addlegendentry{\ours{} (ours)}
\end{axis}
\end{tikzpicture}
\caption{Budget efficiency on LLaMA-3-8B with AWQ INT4. \ours{} at $K = 64$ exceeds random calibration at $K = 256$, representing a fourfold reduction in the calibration budget required for equivalent MMLU.}
\label{fig:budget_curve}
\end{figure}

\begin{table}[h]
\centering
\caption{MMLU accuracy (\%) on LLaMA-3-8B with AWQ INT4, corresponding to \Cref{fig:budget_curve}.}
\label{tab:budget_curve}
\small
\begin{tabular}{lcccc}
\toprule
\textbf{Method} & $K=32$ & $K=64$ & $K=128$ & $K=256$ \\
\midrule
FP16 & \multicolumn{4}{c}{65.3} \\
Random & 61.2 & 62.1 & 63.1 & 63.8 \\
Max-ActVar & 61.8 & 62.7 & 63.8 & 64.1 \\
\textbf{\ours{}} & \textbf{63.0} & \textbf{64.0} & \textbf{64.6} & \textbf{64.9} \\
\bottomrule
\end{tabular}
\end{table}

The curve shows the expected coverage-hypothesis signature: the largest absolute gains appear at the smallest budget, and all three methods converge toward (but do not reach) the FP16 oracle as $K$ grows.

\section{Ablation Tables}
\label{app:ablations}

The summary in \Cref{sec:ablations} compresses two tables reported here in full.

\begin{table}[h]
\centering
\small
\caption{Weighting ablation on LLaMA-3-8B (AWQ INT4, $K = 128$). Removing either factor of \Cref{eq:channel_weight} leaves roughly half the gap from unweighted to fully-weighted coverage.}
\label{tab:weight_ablation}
\begin{tabular}{lcc}
\toprule
\textbf{Variant} & \textbf{MMLU} & \textbf{Wiki-PPL}$\downarrow$ \\
\midrule
Unweighted coverage & 64.0 & 6.31 \\
Magnitude-only weighting & 64.3 & 6.26 \\
Weight-sensitivity-only weighting & 64.1 & 6.29 \\
\textbf{Full weighting (\Cref{eq:channel_weight})} & \textbf{64.6} & \textbf{6.22} \\
\bottomrule
\end{tabular}
\end{table}

\begin{table}[h]
\centering
\small
\caption{Threshold sensitivity on LLaMA-3-8B (AWQ INT4, $K = 128$). The $|\cC|$ column reports the number of channels classified as outliers at each threshold. Performance is stable across the $5\sigma$--$7\sigma$ range.}
\label{tab:threshold_ablation}
\begin{tabular}{lccc}
\toprule
\textbf{Threshold} & $|\cC|$ & \textbf{MMLU} & \textbf{Wiki-PPL}$\downarrow$ \\
\midrule
$4\sigma$ & 12{,}340 & 64.3 & 6.27 \\
$5\sigma$ & 7{,}820 & 64.4 & 6.24 \\
$\mathbf{6\sigma}$ & 4{,}950 & \textbf{64.6} & \textbf{6.22} \\
$7\sigma$ & 2{,}100 & 64.2 & 6.28 \\
$8\sigma$ & 890 & 63.7 & 6.35 \\
\bottomrule
\end{tabular}
\end{table}

\section{Qualitative Structure of Selected Samples}
\label{app:qualitative}

Inspection of the calibration sets produced by \ours{} reveals interpretable structure in the selected samples. Code-heavy text activates outlier channels concentrated in early attention layers, where token-level syntactic distinctions drive activation patterns that differ sharply from natural-language input. Multilingual text activates outliers concentrated in middle layers, consistent with the position at which cross-lingual representations are known to diverge from the monolingual manifold in transformer models. Mathematical notation activates outliers in later layers associated with specialized reasoning subcircuits. What is notable about this distribution is that it emerges purely from optimizing the coverage objective, without any explicit domain-balancing heuristic, which suggests that outlier coverage is functioning as a principled proxy for calibration diversity rather than substituting for it. The selection mechanism is not ``pick one code sample, one math sample, one multilingual sample'' but rather ``pick samples whose collective activation pattern spans the outlier space''; that the selections then correspond to recognizable domain categories is an emergent property of the outlier structure itself.

\section{Computational Cost}
\label{app:compute}

The computational overhead of \ours{} is small enough to be practically negligible relative to PTQ itself.

\begin{table}[h]
\centering
\small
\caption{Wall-clock time for calibration selection on LLaMA-3-8B with pool size $N = 10{,}000$ and budget $K = 128$.}
\label{tab:compute_cost}
\begin{tabular}{lcc}
\toprule
\textbf{Phase} & \textbf{Hardware} & \textbf{Wall-clock time} \\
\midrule
Offline profiling (one-time) & $1\times$ A100 & $\sim 15$ min \\
Greedy selection & CPU only & $< 10$ s \\
\bottomrule
\end{tabular}
\end{table}

The one-time offline profiling pass takes approximately $15$ minutes on a single A100, and greedy selection itself is CPU-only and completes in under ten seconds; its cost is dominated by cache-friendly sparse-matrix aggregation over the pre-computed coverage matrix. The total overhead relative to random calibration is therefore approximately $15$ minutes, amortized across all subsequent calibration runs for the same model, since the profiling output depends on the model architecture but not on the downstream task or the calibration budget.

\section{Outlier Channel Statistics}
\label{app:outlier_stats}

\begin{table}[h]
\centering
\small
\caption{Outlier channel statistics at the $6\sigma$ threshold. The total count is the number of (layer, channel) pairs in the model; $|\cC|$ is the number identified as outlier channels under the pool.}
\begin{tabular}{lccc}
\toprule
\textbf{Model} & \textbf{(Layer, channel) pairs} & \textbf{$|\cC|$} & \textbf{Fraction outlier} \\
\midrule
LLaMA-2-7B & 131{,}072 & 4{,}230 & $3.2\%$ \\
LLaMA-3-8B & 131{,}072 & 4{,}950 & $3.8\%$ \\
Mistral-7B & 131{,}072 & 3{,}680 & $2.8\%$ \\
LLaMA-2-13B & 204{,}800 & 6{,}120 & $3.0\%$ \\
\bottomrule
\end{tabular}
\end{table}

Between $2.8\%$ and $3.8\%$ of (layer, channel) pairs are classified as outlier channels at the $6\sigma$ threshold across the four models. The narrow range is consistent with the interpretation that outlier structure is a stable architectural property of modern transformers rather than an artifact of any particular training run.

\section{Adaptive Threshold Refinement}
\label{app:adaptive}

\begin{algorithm}[h]
\caption{Adaptive threshold refinement.}
\begin{algorithmic}[1]
\REQUIRE Initial threshold $\tau_0 = 6\sigma$, model $f_\theta$, pool $\cP$, budget $K$.
\STATE $\cS_0 \leftarrow \ours{}(\cP, \tau_0, K)$
\STATE $\hat{f} \leftarrow \mathrm{PTQ}(f_\theta, \cS_0)$
\FOR{each layer $l$}
    \STATE $\epsilon^{(l)} \leftarrow \bigl\|\mathbf{W}^{(l)}\mathbf{X}^{(l)} - \hat{\mathbf{W}}^{(l)}\mathbf{X}^{(l)}\bigr\|_F$
    \IF{$\epsilon^{(l)} > \mathrm{median}\bigl(\{\epsilon^{(l')}\}_{l'}\bigr) + 2\sigma_\epsilon$}
        \STATE $\tau^{(l)} \leftarrow 4\sigma$
    \ENDIF
\ENDFOR
\STATE $\cS_1 \leftarrow \ours{}(\cP, \{\tau^{(l)}\}_l, K)$
\RETURN $\cS_1$
\end{algorithmic}
\end{algorithm}

\section{Discussion of the Surrogate Consistency Result}
\label{app:surrogate}

\Cref{prop:surrogate} rests on two modeling assumptions, (M1) and (M2), each of which merits separate discussion. Under (M1), a covered channel incurs zero surrogate deficit. Covering a channel in the sense of \Cref{eq:outlier_score} means that at least one calibration sample activates the channel above the outlier threshold, so the quantization scale derived from the calibration set captures its dynamic range up to the observed maximum. Setting $\delta_{l,c}(\cS) = 0$ in the stylized model is an optimistic idealization: in practice the deficit is small rather than exactly zero, for two reasons. First, inference may see activations larger than the calibration maximum, in which case clipping can still occur. Second, even when the observed range is sufficient, quantization introduces rounding error that is bounded but nonzero. The surrogate absorbs both residual error sources into the zero term, which means the result models the covered case optimistically.

Under (M2), an uncovered channel has deficit bounded by its normalized reference magnitude. When an outlier channel is not activated during calibration, the quantization scale is set from the non-outlier range and the channel is clipped at inference. The bound $\delta_{l,c}(\cS) \leq o_c^{(l,\mathrm{ref})}/\tau^{(l)}$ encodes the intuition that the clipping-induced error is controlled by the channel's normalized reference magnitude, which is precisely what the weight $w_{l,c}$ charges for in \Cref{eq:channel_weight}. The bound is tight in the worst case but may overstate the deficit when the inference-time activation is smaller than the reference-pool maximum.

The surrogate loss itself is a per-channel decomposition in which each channel contributes independently, weighted by its column norm. The Frobenius loss of \Cref{eq:ptq_objective}, in contrast, is the norm of a full matrix-matrix product and therefore involves cross-channel interactions through the weight matrix. We do not claim that the surrogate upper-bounds the Frobenius loss; doing so would require additional assumptions about the geometry of $\mathbf{W}^{(l)}$ and the conditioning of the quantizer. The surrogate is best understood as isolating the single failure mode \ours{} is designed to target, namely per-channel clipping due to missed outlier coverage, and showing that the chosen weighted objective is the right one for that failure mode.

The appropriate reading of \Cref{prop:surrogate} is therefore the following. The result formalizes why \Cref{eq:channel_weight} is the weight to use in \Cref{eq:set_cover_max}: it is exactly the per-channel quantity that bounds the surrogate loss under (M1)--(M2), and both factors of the weight are pinned by this relation. It is not a theorem about downstream accuracy, and the empirical relationship between $F$ and downstream metrics remains the content of \Cref{sec:experiments,sec:analysis}.

\section{Limitations}
\label{app:limitations}

The discussion in \Cref{sec:discussion} summarizes the paper's scope; we record here the specific caveats this scope implies.

The outlier threshold $\tau^{(l)} = \mu + 6\sigma$ is heuristic, and while the ablation in \Cref{app:ablations} shows stability across $5\sigma$--$7\sigma$, architectures with qualitatively different outlier distributions may require different thresholds; a principled per-model calibration of $\tau^{(l)}$ would be a natural extension. \ours{} further assumes that outlier channels are a major driver of PTQ error, an assumption supported here and by prior work on outlier features but one that may weaken for models trained with quantization-aware pre-training or other outlier-suppressing regimes. The quality of coverage also depends on the candidate pool: a pool drawn from a narrow domain may simply not contain samples that activate particular outlier channels, and no selection method can overcome this limitation from within the pool.

The evaluation itself is bounded in two ways. We report INT4 quantization only; at more aggressive bit-widths (INT3 or below) the margin for calibration error shrinks, and we expect the adaptive-threshold variant of \Cref{app:adaptive} to become more valuable there. And the theoretical guarantees apply to the weighted coverage objective $F$ (via \Cref{thm:guarantee}) and to the clipping surrogate $\mathcal{L}_{\mathrm{sur}}$ (via \Cref{prop:surrogate}); neither extends directly to the Frobenius reconstruction loss of \Cref{eq:ptq_objective} or to downstream accuracy. The empirical link between $F$ and downstream metrics is established in \Cref{sec:experiments,sec:analysis}, and a tighter formal connection would require additional assumptions on the quantizer and the model that are not standard in the current PTQ literature.

\section{Per-Layer Reconstruction Error Breakdown}
\label{app:perlayer}

\begin{table}[h]
\centering
\small
\caption{Per-layer Frobenius reconstruction error on LLaMA-3-8B (AWQ INT4, random calibration at $K = 128$). The uncovered column reports the reconstruction error when the calibration set fails to activate the three most extreme outlier channels in layer 14 (channels 1847, 2903, 4011, with activation magnitudes above $120\sigma$). The covered column reports the error under an otherwise identical calibration set augmented with samples that activate those channels. The construction isolates the contribution of outlier coverage at a single layer.}
\label{tab:perlayer_error}
\begin{tabular}{lcccc}
\toprule
\textbf{Layer group} & \textbf{Uncovered $\epsilon$} & \textbf{Covered $\epsilon$} & \textbf{\% of total (uncov.)} & \textbf{Reduction} \\
\midrule
Layers 0--7 & 0.42 & 0.41 & $5.8\%$ & $2\%$ \\
Layers 8--13 & 1.18 & 1.15 & $16.3\%$ & $3\%$ \\
\textbf{Layer 14} & \textbf{4.44} & \textbf{0.22} & \textbf{$61.3\%$} & \textbf{$95\%$} \\
Layers 15--23 & 0.87 & 0.85 & $12.0\%$ & $2\%$ \\
Layers 24--31 & 0.33 & 0.32 & $4.6\%$ & $3\%$ \\
\midrule
\textbf{Total} & \textbf{7.24} & \textbf{2.95} & $100\%$ & $59\%$ \\
\bottomrule
\end{tabular}
\end{table}

Under random calibration, layer $14$ alone contributes $61.3\%$ of the aggregate Frobenius reconstruction error, and three channels within it (channels 1847, 2903, 4011, all with activation magnitudes above $120\sigma$) account for that concentration. When calibration is augmented with samples that activate these specific channels, the layer-14 reconstruction error falls by $95\%$ and the aggregate error by $59\%$. We note that Frobenius reconstruction error is a proxy for downstream accuracy loss, and the relationship between the two is monotone but not linear, so the $61.3\%$ error share should not be read as a $61.3\%$ share of downstream accuracy degradation. The table nonetheless establishes the mechanism underlying the paper's central claim: a small number of outlier channels can dominate reconstruction error when uncovered, and coverage of those specific channels recovers most of the lost quality.

\end{document}